\documentclass[a4paper,11pt,twocolumn]{article}
\usepackage{geometry}
\usepackage{graphicx}
\usepackage{authblk}
\usepackage{array}
\usepackage{varwidth}
\usepackage{graphicx}
\usepackage{amsmath}
\usepackage{amssymb}
\usepackage{amsfonts}
\usepackage{amsthm}
\usepackage{algorithm}
\usepackage{algpseudocode}
\usepackage{epsfig}
\usepackage{tensor}
\usepackage{booktabs}
\usepackage{wasysym}
\usepackage{bm}
\usepackage[dvipsnames]{xcolor}
\usepackage{url}
\usepackage{rotating}
\usepackage{lineno,hyperref}
\usepackage[font=small,labelfont=bf]{caption}

\newcommand\T{{\hspace{-0pt}\intercal}}
\newcommand*\diff{\mathop{}\!\mathrm{d}}

\DeclareMathOperator{\sat}{sat}

\DeclareMathOperator*{\argmin}{arg\,min}

\DeclareMathOperator{\Diag}{diag}

\newtheorem{proposition}{Proposition}

\newtheorem{remark}{Remark}

\DeclareGraphicsExtensions{.pdf}

\begin{document}

\title{A Lyapunov-Stable Adaptive Method to Approximate Sensorimotor Models for Sensor-Based Control}
\author[a]{David Navarro-Alarcon}
\author[a]{Jiaming Qi}
\author[b]{Jihong Zhu}
\author[b]{Andrea Cherubini}
\affil[a]{The Hong Kong Polytechnic University, Hong Kong}
\affil[b]{Universit\'{e} de Montpellier / LIRMM, France}
\date{}

\maketitle

\begin{abstract}
In this article, we present a new scheme that approximates unknown sensorimotor models of robots by using feedback signals only.
The formulation of the uncalibrated sensor-based regulation problem is first formulated, then, we develop a computational method that distributes the model estimation problem amongst multiple adaptive units that specialise in a local sensorimotor map.
Different from traditional estimation algorithms, the proposed method requires little data to train and constrain it (the number of required data points can be analytically determined) and has rigorous stability properties (the conditions to satisfy Lyapunov stability are derived).
Numerical simulations and experimental results are presented to validate the proposed method.
\end{abstract}

\textbf{Keywords:} Robotics, sensorimotor models, adaptive systems, sensor-based control, servomechanisms, visual servoing.

\section{Introduction}
	Robots are widely used in industry to perform a myriad of sensor-based applications ranging from visually servoed pick-and-place tasks to force-regulated workpiece assemblies \cite{Book:Nof1999}.
	Their accurate operation is largely due to the fact that industrial robots rely on fixed settings that enable the exact characterisation of the tasks' sensorimotor model.
	Although this full characterisation requirement is fairly acceptable in industrial environments, it is too stringent for many service applications where the mechanical, perceptual and environment conditions are not exactly known or might suddenly change \cite{dna2019_ccc}, e.g. in domestic robotics (where environments are highly dynamic), field robotics (where variable morphologies are needed to navigate complex workspaces), autonomous systems (where robots must adapt and operate after malfunctions), to name a few cases.
	
	In contrast to industrial robots, the human brain has a high degree of adaptability that allows it to continuously learn sensorimotor relations.
	The brain can seemingly coordinate the body (whose morphology persistently changes throughout life) under multiple circumstances: severe injuries, amputations, manipulating tools, using prosthetics, etc.
	It can also recalibrate corrupted or modified perceptual systems: a classical example is the manipulation experiment performed in \cite{Journals:Kohler1962} with image inverting goggles that altered a subject's visual system. 
	In infants, motor babbling is used for obtaining (partly from scratch and partly innate) a coarse sensorimotor model that is gradually refined with repetitions \cite{Journals:von1982}.
	Providing robots with similar incremental and life-long adaptation capabilities is precisely our goal in this paper.

	From an automatic control point of view, a sensorimotor model is needed for coordinating input motions of a mechanism with output sensor signals \cite{Journals:Huang1994}, e.g. controlling the shape of a manipulated soft object based on vision \cite{dna2016_tro} or controlling the balance of a walking machine based on a gyroscope \cite{Proceedings:Yu2018}. 
	In the visual servoing literature, the model is typically represented by the so-called interaction matrix \cite{Journals:Cherubini2015,Journals:Hutchinson1996}, which is computed based on kinematic relations between the robot's configuration and the camera's image projections.
	In the general case, sensorimotor models depend on the physics involved in constructing the output sensory signal; If this information is uncertain (e.g. due to bending of robot links, repositioning of external sensors, deformation of objects), the robot may no longer properly coordinate actions with perception.
	Therefore, it is important to develop methods that can efficiently provide robots with the capability to adapt to unforeseen changes of the sensorimotor conditions.

	Classical methods in robotics to compute this model (see \cite{Journals:Sigaud2011} for a review) can be roughly classified into \emph{structure-based} and \emph{structure-free} approaches \cite{dna2019_ccc}.
	The former category represents ``calibration-like'' techniques (e.g. off-line \cite{Journals:Wei1998} or adaptive \cite{dna2015_iros,Journals:Liu2013,Journals:Wang2008}) that aim to identify the unknown model parameters.
	These approaches are easy to implement, however, they require exact knowledge of the analytical structure of the sensory signal (which might not be available or subject to large uncertainties).
	Also, since the resulting model is fixed to the mechanical/perceptual/environmental setup that was used for computing it, these methods are not robust to unforeseen changes. 
	
	For the latter (structure-free) category, we can further distinguish between two main types \cite{dna2019_ccc}: instantaneous and distributed estimation.
	The first type performs online numerical approximations of the unknown model (whose structure does not need to be known); Some common implementations include e.g. Broyden-like methods \cite{Journals:Alambeigi2018,Proceedings:Jagersand1997,Proceedings:Hosoda1994} and iterative gradient descent rules \cite{tiffany2017_rcar,dna2015_iros}.
	These methods are robust to sudden configuration changes, yet, as the sensorimotor mappings are continuously updated, they do not preserve knowledge of previous estimations (i.e. it's model is only valid for the current local configuration).
	The second type distributes the estimation problem amongst multiple computing units; The most common implementation is based on (highly nonlinear) connectionists architectures \cite{Journals:Hu2019,Proceedings:Lyu2018,Journals:Li2014}. 
	These approaches require very large amounts of training data to properly constrain the learning algorithm, which is impractical in many situations.
	Other distributed implementations (based on SOM-like sensorimotor ``patches'' \cite{Journals:Kohonen2013}) are reported e.g. in \cite{omar2019_taros,Journals:Pierris2017,Journals:Escobar2016}, yet, the stability properties of its algorithms are not rigorously analysed.

	As a solution to these issues, in this paper we propose a new approach that approximates unknown sensorimotor models based on local data observations only.
	In contrast to previous state-of-the-art methods, our adaptive algorithm has the following original features: 
	\begin{itemize}
		\item It requires few data observations to train and constrain the algorithm (which allows to implement it in real-time).
		\item The number of minimum data points to train it can be analytically obtained (which makes data collection more effective).
		\item The stability of its update rule can be rigorously proved (which enables to deterministically predict its performance).
	\end{itemize}
	The proposed method is general enough to be used with different types of sensor signals and robot mechanisms.

	The rest of the manuscript is organised as follows: Sec. \ref{sec:preliminaries} presents preliminaries, Sec. \ref{sec:methods} describes the proposed method, Sec. \ref{sec:results} reports the conducted numerical study and Sec. \ref{sec:conclusion} gives final conclusions.

	\section{Preliminaries} \label{sec:preliminaries}
	
	\subsection{Notation}
	Along this note we use very standard notation.
	Column vectors are denoted with bold small letters $\mathbf m$ and matrices with bold capital letters $\mathbf M$. 
	Time evolving variables are represented as $\mathbf m_t$, where the subscript $\ast_t$ denotes the discrete time instant. 
	Gradients of functions $b=\beta(\mathbf m):\mathcal M\mapsto\mathcal B$ are denoted as $\nabla\beta(\mathbf m) = (\partial \beta/\partial\mathbf m{)}^\T$.
	
	\subsection{Configuration Dependant Feedback}
	Consider a fully-actuated robotic system whose instantaneous configuration vector (modelling e.g. end-effector positions in a manipulator, orientation in a robot head, etc.) is denoted by the vector $\mathbf x_t\in\mathbb R^n$.
	Such model can only be used to represent traditional \emph{rigid} systems, thus, it excludes soft/continuum mechanisms \cite{Journals:Falkenhahn2015} or robots driven by elastic actuators \cite{zerui2015_tmech}.
	Without loss of generality, we assume that its coordinates are all represented using the same unitless range\footnote{This can be easily obtained with constant kinematic transformations.}.
	To perform a task, the robot is equipped with a sensing system that continuously measure a physical quantity whose instantaneous values depend on $\mathbf x_t$. 
	Some examples of these types of configuration-dependent feedback signals are: geometric features in an image \cite{marie2020_ral}, forces applied onto a compliant surface \cite{Journals:Bouyarmane2019}, proximity to an object \cite{Journals:Cherubini2013}, intensity of an audio source \cite{Proceedings:Magassouba2016}, attitude of a balancing body \cite{Journals:Defoort2009}, shape of a manipulated object \cite{dna2018_tro}, temperature from a heat source \cite{Proceedings:Saponaro2015}, etc.
	
	Let $\mathbf y_t\in\mathbb R^m$ denote the vector of feedback features that quantify the task; Its coordinates might be constructed with raw measurements or be the result of some processing.
	We model the instantaneous relation between this sensor signal and the robot's configuration as \cite{Journals:Chaumette2006}:
	\begin{equation}
	\mathbf y_t = f(\mathbf x_t) : \mathbb R^n \mapsto \mathbb R^m
	\label{eq:sensor_model}
	\end{equation}

	\begin{remark}
	Along this paper, we assume that the feedback feature functional $f(\mathbf x_t)$ is smooth (at least twice differentiable) and its Jacobian matrix has a full row/column rank (which guarantees the existence of its (pseudo-)inverse).
	\end{remark}

	\subsection{Uncalibrated Sensorimotor Control}
	In our formulation of the problem, it is assumed that the robotic system is controlled via a standard position/velocity interface, as in e.g. \cite{Journals:Whitney1969,Journals:Siciliano1990}, a situation that closely models the majority of commercial robots.
	With position interfaces, the motor action $\mathbf u_t\in\mathbb R^n$ represents the following displacement difference:
	\begin{equation}
	\mathbf x_{t+1} - \mathbf x_t = \mathbf u_t	
	\label{eq:displacement_control}
	\end{equation}
	Such \emph{kinematic control} interface renders the typical stiff behaviour present in industrial robots (for this model, external forces do not affect the robot's trajectories).
	The methods in this paper are formulated using position commands, however, these can be easily transformed into robot velocities $\mathbf v_t\in\mathbb R^n$ by dividing $\mathbf u_t$ by the servo controller's time step $\diff t$ as follows $\mathbf u_t/\diff t = \mathbf v_t$.
	
	The expression that describes how the motor actions result in changes of feedback features is represented by the first-order difference model\footnote{This difference equation represents the discrete-time model of the robot's differential sensor kinematics.}:
	\begin{equation}
	\mathbf y_{t+1} = \mathbf y_t + \mathbf A (\mathbf x_t) \mathbf u_t = \mathbf y_t + {\boldsymbol \delta}_t
	\label{eq:differential_model}
	\end{equation}
	where the configuration-dependent matrix $\mathbf A (\mathbf x_t) = \partial f/\partial \mathbf x_t \in \mathbb R^{m\times n}$ represents the traditional sensor Jacobian matrix of the system (also known as the interaction matrix in the visual servoing literature \cite{Journals:Hutchinson1996}).
	To simplify notation, throughout this paper we shall omit its dependency on $\mathbf x_t$ and denote it as $\mathbf A_t = \mathbf A (\mathbf x_t)$.
	The flow vector $\boldsymbol \delta_t = \mathbf A_t \mathbf u_t\in\mathbb R^m$ represents the sensor changes that result from the action $\mathbf u_t$.
	Figure \ref{fig:sensorimotor_mappings} conceptually depicts these quantities.
	
	The sensorimotor control problem consists in computing the necessary motor actions for the robot to achieve a desired sensor configuration.
	Without loss of generality, in this note, such configuration is characterised as the regulation of the feature vector $\mathbf y_t$ towards a constant target $\mathbf y^*$.
	The necessary motor action to reach the target can be computed by minimising the following quadratic cost function:
	\begin{equation}
	J = \left\| \lambda\sat(\mathbf y_t-\mathbf y^*) + \mathbf A_t\mathbf u_t \right\|^2
	\label{eq:cost_function}
	\end{equation}
	where $\lambda>0$ is a gain and $\sat(\cdot)$ a standard saturation function (defined as in e.g. \cite{Journals:Chang2018}).
	The rationale behind the minimisation of the cost \eqref{eq:cost_function} is to find an incremental motor command $\mathbf u_t$ that forward-projects into the sensory space (via the interaction matrix $\mathbf A_t$) as a vector pointing towards the target $\mathbf y^*$. 
	By iteratively commanding these motions, the distance $\|\mathbf y_t-\mathbf y^*\|$ is expected to be asymptotically minimised.

	To obtain $\mathbf u_t$, let us first compute the extremum $\nabla J(\mathbf u_t) = \mathbf 0$, which yields the normal equation
	\begin{equation}
	\mathbf A_t^\T \mathbf A_t\mathbf u_t = - \lambda \mathbf A_t^\T \sat(\mathbf y_t-\mathbf y^*)
	\label{eq:normal_equation}
\end{equation}
	Solving \eqref{eq:normal_equation} for $\mathbf u_t$, gives rise to the motor command that minimises $J$:
	\begin{equation}
	\mathbf u_t = - \lambda \mathbf A^{\#}_t \sat(\mathbf y_t-\mathbf y^*)
	\label{eq:general_motor_action}
	\end{equation}
	where $\mathbf A^{\#}_t \in \mathbb R^{n\times m}$ is a generalised pseudo-inverse matrix satisfying $\mathbf A_t\mathbf A^{\#}_t\mathbf A_t=\mathbf A_t$ \cite{Book:Nakamura1991},
	whose existence is guaranteed as $\mathbf A_t$ has a full column/row rank (depending on whichever is larger $n$ or $m$).
	Yet, note that for the case where $m>n$, the cost function $J$ can only be locally minimised.

	Note that the computation of \eqref{eq:general_motor_action} requires exact knowledge of $\mathbf A_t$.
	To analytically calculate this matrix, we need to \emph{fully calibrate} the system, which is too restrictive for applications where the sensorimotor model is unavailable or might suddenly change.
	This situation may happen if the mechanical structure of the robot is altered (e.g. due to bendings or damage of links), or the configuration of the perceptual system is changed (e.g. due to relocating external sensors), or the geometry of a manipulated object changes (e.g. due to grasping forces deforming a soft body), to name a few cases.
	Without this information, the robot may not properly coordinate actions with perception.
	In the following section, we describe our proposed solution.
		
	\begin{figure}[t]
		\centering
		\includegraphics[width = \columnwidth]{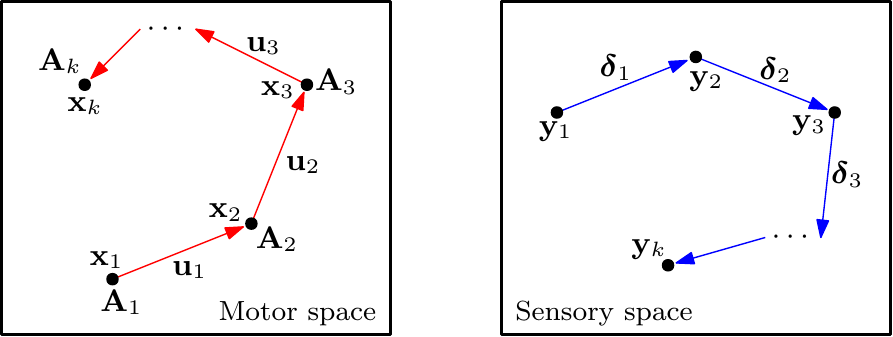}
		\caption{Representation of a configuration trajectory $\mathbf x_t$, its associated transformation matrices $\mathbf A_t$ and motor actions $\mathbf u_t$, that produce the measurements $\mathbf y_t$ and sensory changes $\boldsymbol\delta_t$.}
		\label{fig:sensorimotor_mappings}
	\end{figure}

	\section{Methods}\label{sec:methods}
	\subsection{Discrete Configuration Space}
	Since the (generally non-linear) feature functional \eqref{eq:sensor_model} is smooth, the Jacobian matrix $\mathbf A_t = \partial f/\partial \mathbf x_t$ is also expected to smoothly change along the robot's configuration space.
	This situation means that a \emph{local} estimation $\widehat{\mathbf A}$ of the true matrix $\mathbf A_t$ around a configuration point $\mathbf x_i$ is also valid around the surrounding neighbourhood \cite{Journals:Sang2012}.
	We exploit this simple yet powerful idea to develop a computational method that distributes the model estimation problem amongst various units that specialise in a local sensorimotor map.

	It has been proved in the sensor-based control community \cite{Journals:Cheah2003} that rough estimations of $\mathbf A_t$ (combined with the rectifying action of feedback) are sufficient for guiding the robot with sensory signals.
	However, note that large deviations from such configuration point $\mathbf x_i$ may result in model inaccuracies.
	Therefore, the local neighbourhoods cannot be too large.
	
	Consider a system with $N$ computing units distributed around the robot's configuration space, see Figure \ref{fig:neighbourhood}.
	The location of these units can be defined with many approaches, e.g. with self organisation \cite{Book:Kohonen2001}, random distributions, uniform distributions, etc. \cite{Book:Haykin2009}.
	To each unit, we associate the following 3-tuple:
	\begin{equation}
	z^l = \begin{Bmatrix} \mathbf w^l & \widehat{\mathbf A}_t^l & \mathcal D^l \end{Bmatrix},\quad \text{for}\quad l = 1,\ldots,N
	\end{equation}
	The weight vector $\mathbf w^l\in\mathbb R^n$ represents a configuration $\mathbf x_t$ of the robot where $\mathbf w^l = \mathbf x_t$.
	The matrix $\widehat{\mathbf A}_t^l \in\mathbb R^{n\times m}$ stands for a local approximation of $\mathbf A_t(\mathbf w^l)$ evaluated at the point $\mathbf w^l$.
	The purpose	of the structure $\mathcal D^l$ is to store sensor and motor observations $\mathbf d_t = \{ \mathbf x_t , \mathbf u_t , \boldsymbol\delta_t \}$, that are collected around the vicinity of $\mathbf w^l$ through babbling-like motions \cite{Proceedings:Saegusa2009}.
	The structure $\mathcal D^l$ is constructed as follows:
	\begin{equation}
	\mathcal D^l = \begin{Bmatrix} \mathbf d_1 & \mathbf d_2 & \cdots & \mathbf d_\tau \end{Bmatrix}^\T
	\label{eq:data_structure}
	\end{equation}
	for $\tau>0$ as the total number of observations, which once collected, they remain constant during the learning stage.
	Note that $\mathbf x_i$ and $\mathbf x_{i+1}$ are typically not consecutive time instances.
	The total number $\tau$ of observations is assumed to satisfy $\tau > mn$.
	
	\subsection{Initial Learning Stage}
	We propose an adaptive method to iteratively compute the local transformation matrix from data observations.
	To this end, consider the following quadratic cost function for the $l$th unit:
	\begin{align}
	Q^l &= \frac{1}{2} \sum_{k=1}^{\tau} h^{lk} \left\| \widehat{\mathbf A}_t^l \mathbf u_k - \boldsymbol\delta_k \right\|^2 \nonumber \\
	&= \frac{1}{2} \sum_{k=1}^{\tau} h^{lk} \left\| \mathbf F(\mathbf u_k) \widehat{\mathbf a}^l_t - \boldsymbol\delta_k \right\|^2 
	\label{eq:cost_function_distributed}
	\end{align}
	for $\mathbf F(\mathbf u_k)\in\mathbb R^{m\times mn}$ as a regression-like matrix defined as
	\begin{equation}
	\mathbf F(\mathbf u_k) = 
	\begin{bmatrix}
	\mathbf u_k^\T & \mathbf 0_n^\T & \cdots & \mathbf 0_n^\T \\
	\mathbf 0_n^\T & \mathbf u_k^\T & \cdots & \mathbf 0_n^\T \\
	\vdots & \vdots & \ddots & \vdots \\
	\mathbf 0_n^\T & \mathbf 0_n^\T & \cdots & \mathbf u_k^\T
	\end{bmatrix}
	\end{equation}
	and a vector of adaptive parameters $\widehat{\mathbf a}^l_t\in\mathbb R^{nm}$ constructed as:
	\begin{equation}
	\widehat{\mathbf a}^l_t = 
	\begin{bmatrix}
	\hat{a}^{l11}_t & \hat{a}^{l12}_t & \cdots & \hat{a}^{lmn}_t 
	\end{bmatrix}^\T
	\end{equation}
	where the scalar $\hat{a}^{lij}_t$ denotes the $i$th row $j$th column element of the matrix $\widehat{\mathbf A}^l_t$.
	
	The scalar $h^{lk}$ represents a Gaussian neighbourhood function centred at the $l$th unit and computed as:
	\begin{equation}
	h^{lk} = \exp \left( - \frac{ \|{\mathbf w}^l - {\mathbf x}_k \|^2 }{2\sigma^2} \right)
	\end{equation}
	where $\sigma>0$ (representing the standard deviation) is used to control the width of the neighbourhood.
	By using $h^{lk}$, the observations' contribution to the cost \eqref{eq:cost_function_distributed} proportionally decreases with the distance to $\mathbf w^l$.
	The dimension of the neighbourhood is defined such that $h\approx 0$ is never satisfied for any of its observations $\mathbf x^k$.
	In practice, it is common to approximate the Gaussian shape with a simple ``square'' region, which presents the highest approximation error around its corners (see e.g. Figure \ref{fig:neighbourhood} where the sampling point $\mathbf d_{\tau+1}$ is within its boundary).

	\begin{figure}[t]
		\centering
		\includegraphics[width = \columnwidth]{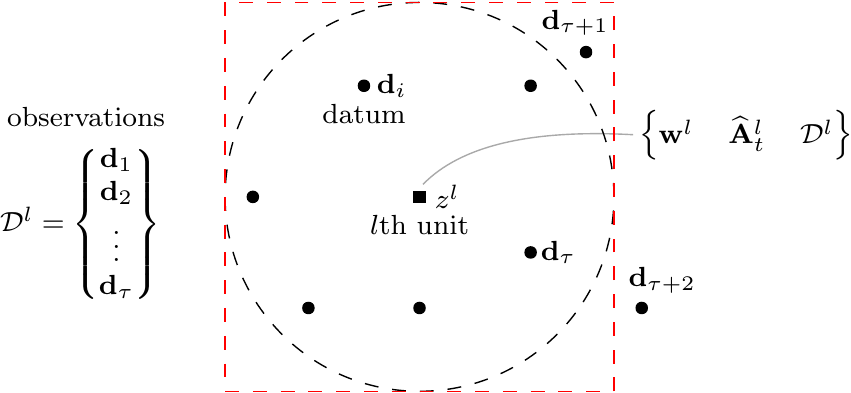}
		\caption{Representation of the $l$th computing unit and the neighbouring data used to approximate the local sensorimotor model. The black and red dashed depict the Gaussian and its square approximation.}
		\label{fig:neighbourhood}
	\end{figure}

	To compute an accurate sensorimotor model, the data points in \eqref{eq:data_structure} should be as distinctive as possible (i.e. the motor observations $\mathbf u_t$ should not be collinear). 
	This requirement can be fairly achieved by covering the uncertain configuration with curved/random motions.
	
	The following gradient descent rule is used for approximating the transformation matrix $\mathbf A_t$ at the $l$th unit:
	\begin{align}
	\widehat{\mathbf a}^{l}_{t+1} &= \widehat{\mathbf a}^{l}_t - \gamma \nabla Q^l(\widehat{\mathbf a}^{l}_t) \label{eq:update_rule} \\
	&= \widehat{\mathbf a}^{l}_t - \gamma \sum_{k=1}^{\tau} h^{lk} \mathbf F(\mathbf u_k)^\T \left( \widehat{\mathbf A}_t^l \mathbf u_k - \boldsymbol\delta_k \right) \nonumber
	\end{align}
	for $\gamma>0$ as a positive learning gain.
	For ease of implementation, the update rule \eqref{eq:update_rule} can be equivalently expressed in scalar form as:
	\begin{equation}
	\hat{a}^{lij}_{t+1} = \hat{a}^{lij}_t - \gamma \sum_{k=1}^{\tau} h^{lk} u_k^j \left\{ \left( \sum_{r=1}^{n} \hat{a}_t^{lir} u_k^r  \right) - \delta_k^i \right\}
	\label{eq:updae_rule_scalar}
	\end{equation}
	where $u_k^j$ and $\delta_k^i$ denote the $j$th and $i$th components of the vectors $\mathbf u_k$ and $\boldsymbol\delta_k$, respectively.
	
	\begin{remark}
	There are other estimation methods in the literature that also make use of Gaussian functions, e.g. radial basis functions (RBF) \cite{Journals:Li2014} to name an instance. 
	However, RBF (in its standard formulation) use configuration-dependent Gaussians to modulate a set of weights (which provide non-linear approximation capabilities), whereas in our case, the Gaussians are used but within the weights' adaptation law to proportionally scale the contribution of the collected sensory-motor data (our method provides a \emph{linear} approximation within the neighbourhood).
	Our Gaussian weighted approach most closely resembles the one used in self organising maps (SOM) \cite{Journals:Kohonen2013} to combine surrounding data observations. 
	\end{remark}

	\subsection{Lyapunov Stability}
	In this section, we analyse the stability properties of the proposed update rule by using discrete-time Lyapunov theory \cite{Preprint:Bof2018}.
	To this end, let us first assume that the transformation matrix satisfies: 
	\begin{equation}
	\mathbf A(\mathbf w^l) = \partial f \ \partial \mathbf x (\mathbf w^l) \approx \mathbf A(\mathbf x_j)
	\end{equation} 
	for any configuration $\mathbf x_j$ around the neighbourhood defined by $\mathcal D^l$ (this situation implies that $\mathbf A(\cdot)$ is constant around the vicinity of $\mathbf w^l$). 
	Therefore, we can locally express around $\mathbf w^l$ the sensor changes as: 
	\begin{equation}
	\boldsymbol\delta_k = \mathbf F(\mathbf u_k)\mathbf a^l
	\end{equation}
	where $\mathbf a^l= [a^{l11},{a}^{l12},\ldots,{a}^{lmn}{]}^\T\in\mathbb R^{mn}$ denotes the vector of \emph{constant} parameters, for ${a}^{lij}$ as the $i$th row $j$th column of the unknown matrix ${\mathbf A}(\mathbf w^l)$.
	To simplify notation, we shall denote $\mathbf F_k = \mathbf F(\mathbf u_k)$.

	\begin{proposition}
		For a number $mn$ of linearly independent vectors $\mathbf u_k$, the adaptive update rule \eqref{eq:update_rule} asymptotically minimises the magnitude of the parameter estimation error $\|\widehat{\mathbf a}_t^l - \mathbf a^l\|$.
	\end{proposition}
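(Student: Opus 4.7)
The plan is to linearize the update around the true parameter vector and show that the error dynamics are governed by a contraction map, then invoke a discrete-time Lyapunov argument as anticipated in the preceding paragraph.

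First I would introduce the parameter error $\widetilde{\mathbf a}^l_t = \widehat{\mathbf a}^l_t - \mathbf a^l$. Since the local linearity hypothesis gives $\boldsymbol\delta_k = \mathbf F_k \mathbf a^l$ and $\widehat{\mathbf A}_t^l \mathbf u_k = \mathbf F_k \widehat{\mathbf a}_t^l$, the residual inside the update rule \eqref{eq:update_rule} simplifies to $\mathbf F_k \widetilde{\mathbf a}_t^l$. Subtracting the (time-invariant) $\mathbf a^l$ from both sides of \eqref{eq:update_rule} therefore yields the closed-form error recursion
\begin{equation}
\widetilde{\mathbf a}^l_{t+1} = (\mathbf I - \gamma \mathbf H^l)\, \widetilde{\mathbf a}^l_t, \qquad \mathbf H^l = \sum_{k=1}^{\tau} h^{lk}\, \mathbf F_k^\T \mathbf F_k.
\end{equation}

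Next I would establish that $\mathbf H^l$ is symmetric positive definite. Each $\mathbf F_k^\T \mathbf F_k$ is the block-diagonal matrix whose $m$ identical diagonal blocks equal $\mathbf u_k \mathbf u_k^\T$, so $\mathbf H^l$ inherits this block-diagonal structure with each block proportional to $\sum_{k} h^{lk} \mathbf u_k \mathbf u_k^\T$. Because the Gaussian weights $h^{lk}$ are strictly positive, the linear independence (spanning) hypothesis on the $\mathbf u_k$'s guarantees that each block is positive definite, and hence $\mathbf H^l \succ 0$. Let $0<\lambda_{\min} \leq \lambda_{\max}$ denote its extreme eigenvalues.

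Finally I would take the candidate Lyapunov function $V_t = \tfrac{1}{2}\, \widetilde{\mathbf a}_t^{l\T} \widetilde{\mathbf a}_t^l$ and compute the one-step increment
\begin{equation}
\Delta V_t = \tfrac{1}{2}\, \widetilde{\mathbf a}_t^{l\T} \bigl( (\mathbf I-\gamma \mathbf H^l)^\T (\mathbf I-\gamma \mathbf H^l) - \mathbf I \bigr) \widetilde{\mathbf a}_t^l,
\end{equation}
which expands to $\widetilde{\mathbf a}_t^{l\T}\bigl(-\gamma \mathbf H^l + \tfrac{\gamma^2}{2}(\mathbf H^l)^2\bigr)\widetilde{\mathbf a}_t^l$. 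A standard estimate gives $\Delta V_t \le -\gamma\lambda_{\min}(1 - \tfrac{\gamma}{2}\lambda_{\max})\|\widetilde{\mathbf a}_t^l\|^2$, which is negative definite provided $0 < \gamma < 2/\lambda_{\max}(\mathbf H^l)$. Discrete-time Lyapunov stability \cite{Preprint:Bof2018} then yields $V_t \to 0$, i.e., $\|\widetilde{\mathbf a}_t^l\| \to 0$ geometrically, as claimed.

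I expect the main obstacle to be the positive-definiteness step: translating the stated ``$mn$ linearly independent vectors $\mathbf u_k$'' condition (with $\mathbf u_k\in\mathbb R^n$) into spanning of $\mathbb R^n$ for each of the $m$ diagonal blocks, and being explicit about the admissible range of $\gamma$ this implies. The remainder of the argument is essentially a direct substitution into a linear recursion plus a textbook Lyapunov bound.
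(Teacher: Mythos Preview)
Your argument is correct and tracks the paper's proof closely: both use the quadratic Lyapunov candidate $V_t=\|\widehat{\mathbf a}_t^l-\mathbf a^l\|^2$ (up to the factor $\tfrac12$), both reduce the update to the linear error recursion $\widetilde{\mathbf a}_{t+1}^l=(\mathbf I-\gamma\mathbf H^l)\widetilde{\mathbf a}_t^l$, and both conclude by showing that $2\gamma\mathbf H^l-\gamma^2(\mathbf H^l)^2$ is positive definite. The only real difference is in how that last positive-definiteness step is carried out. The paper factors $\mathbf H^l=\boldsymbol\Phi^\T\mathbf H\boldsymbol\Phi$ via the stacked observation matrix $\boldsymbol\Phi$, argues full column rank of $\boldsymbol\Phi$ by explicitly rearranging $mn$ rows, and then asserts the \emph{existence} of a sufficiently small $\gamma$ making $\mathbf C=2\mathbf H-\gamma\mathbf H\boldsymbol\Phi\boldsymbol\Phi^\T\mathbf H\succ0$ (with a line-search Algorithm~1 to find it). You instead exploit the block-diagonal structure $\mathbf F_k^\T\mathbf F_k=\Diag(\mathbf u_k\mathbf u_k^\T,\ldots,\mathbf u_k\mathbf u_k^\T)$ directly, which immediately shows $\mathbf H^l\succ0$ once the $\mathbf u_k$ span $\mathbb R^n$ and yields the explicit, sharp step-size bound $0<\gamma<2/\lambda_{\max}(\mathbf H^l)$. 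Your route is more elementary and more informative about the admissible $\gamma$; the paper's $\boldsymbol\Phi$ formulation makes the ``$mn$ independent rows'' bookkeeping visible but is otherwise equivalent. Your closing caveat is well placed: since $\mathbf u_k\in\mathbb R^n$, the proposition's phrase ``$mn$ linearly independent vectors $\mathbf u_k$'' should really be read as ``enough observations so that the $\mathbf u_k$ span $\mathbb R^n$'' (equivalently, $\boldsymbol\Phi$ has full column rank $mn$), which is exactly what both arguments actually use.
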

	
	\begin{proof}
		Consider the following quadratic (energy-like) function:
		\begin{equation}
		V_t^l = \left\|\widehat{\mathbf a}_t^l - \mathbf a^l \right\|^2
		\label{lyapunov_function}
		\end{equation}
		Computing the forward difference of $V_t^l$ yields:
		\begin{multline}
		V_{t+1}^l - V_t^l
		= \left\|\widehat{\mathbf a}_{t+1}^l - \mathbf a^l \right\|^2 - \left\|\widehat{\mathbf a}_t^l - \mathbf a^l \right\|^2 \nonumber \\
		= \left\| \left[ \mathbf I - \gamma \sum_{k=1}^{\tau} h^{lk} \mathbf F_k^\T \mathbf F_k \right] \left(\widehat{\mathbf a}_t^l - \mathbf a^l \right) \right\|^2 \nonumber \\ 
		- \left\|\widehat{\mathbf a}_t^l - \mathbf a^l \right\|^2 = - \left(\widehat{\mathbf a}_t^l - \mathbf a^l \right)^\T \boldsymbol\Omega \left(\widehat{\mathbf a}_t^l - \mathbf a^l \right)
		\end{multline}
		for a symmetric matrix $\boldsymbol\Omega\in\mathbb R^{mn\times mn}$ defined as follows:
		\begin{align}
		\boldsymbol\Omega 
		&= \mathbf I - \left[\mathbf I - \gamma \sum_{k=1}^{\tau} h^{lk} \mathbf F_k^\T \mathbf F_k \right]^2 \nonumber \\
		&= 2\gamma \sum_{k=1}^{\tau} h^{lk} \mathbf F_k^\T \mathbf F_k - \gamma^2 \left[ \sum_{k=1}^{\tau} h^{lk} \mathbf F_k^\T \mathbf F_k \right]^2 \nonumber \\
		&= \gamma \boldsymbol\Phi^\T \underbrace{ \left( 2\mathbf H - \gamma \mathbf H \boldsymbol\Phi \boldsymbol\Phi^\T \mathbf H  \right) }_{\mathbf C} \boldsymbol\Phi
		\label{eq:dissipation_matrix}
		\end{align}
		with $\mathbf H=\Diag(h^{l1}\mathbf I_{\tau},\ldots,h^{l\tau}\mathbf I_\tau)\in\mathbb R^{m\tau\times m\tau}$ as a positive-definite diagonal matrix, $\mathbf I_\tau\in\mathbb R^{\tau\times\tau}$ as an identity matrix and $\boldsymbol\Phi\in\mathbb R^{m\tau\times mn}$ constructed with $\tau$ matrices $\mathbf F_k$ as follows:
		\begin{equation}
		\boldsymbol\Phi = 
		\begin{bmatrix}
		\mathbf F^\T_1 & \mathbf F_2^\T & \cdots & \mathbf F^\T_\tau
		\end{bmatrix}^\T
		\end{equation}
		To prove the asymptotic stability of \eqref{eq:update_rule}, we must first prove the positive-definiteness of the dissipation-like matrix $\boldsymbol\Omega$ \cite{Book:van_der_Schaft}.
		To this end, note that since the ``tall'' observations' matrix $\boldsymbol\Phi$ is exactly known and $\mathbf H$ is diagonal and positive (hence full-rank), we can always find a gain $\gamma>0$ to guarantee that the symmetric matrix
		\begin{equation}
		\mathbf C = 2\mathbf H - \gamma \mathbf H \boldsymbol\Phi \boldsymbol\Phi^\T \mathbf H > 0,
		\label{eq:learning_gain_condition}
		\end{equation}
		is also positive-definite, and therefore, full-rank.
		Next, let us re-arrange $mn$ linearly \emph{independent} row vectors from $\boldsymbol\Phi$ as follows:
		\begin{equation}
		\begin{bmatrix}
		\mathbf u_1^\T & \mathbf 0_n^\T & \cdots & \mathbf 0_n^\T \\
		\mathbf u_2^\T & \mathbf 0_n^\T & \cdots & \mathbf 0_n^\T \\
		\vdots & \vdots & \vdots & \vdots \\
		\mathbf u_n^\T & \mathbf 0_n^\T & \cdots & \mathbf 0_n^\T \\
		\mathbf 0_n^\T & \mathbf u_{n+1}^\T & \cdots & \mathbf 0_n^\T \\
		\mathbf 0_n^\T & \mathbf u_{n+2}^\T & \cdots & \mathbf 0_n^\T \\
		\vdots & \vdots & \ddots & \vdots \\
		\mathbf 0_n^\T & \mathbf 0_n^\T & \cdots & \mathbf u_{mn-1}^\T\\
		\mathbf 0_n^\T & \mathbf 0_n^\T & \cdots & \mathbf u_{mn}^\T
		\end{bmatrix}
		\end{equation}
		which shows that $\boldsymbol\Phi$ has a full column rank, hence, the matrix $\boldsymbol\Omega = \gamma\boldsymbol\Phi^\T \mathbf C \boldsymbol\Phi>0$ is positive-definite.
		This condition implies that $V_{t+1}^l - V_t^l < 0$ for any $\|\widehat{\mathbf a}_t^l - \mathbf a^l\|\ne 0$.
		Asymptotic stability of the parameter's estimation error directly follows by invoking Lyapunov's direct method \cite{Preprint:Bof2018}.
	\end{proof}
	
	\begin{remark}
		There are two conditions that need to be satisfied to ensure the algorithm's stability.
		The first condition is related to the magnitude of the learning gain $\gamma$. 
		Large gain values may lead to numerical instabilities, which is a common situation in discrete-time adaptive systems.
		To find a ``small enough'' gain $\gamma>0$, we can conduct the simple 1D search shown in Algorithm 1. An eigenvalue test on $\mathbf C$ can be used to verify \eqref{eq:learning_gain_condition}.
		The second condition is related to the linear independence (i.e. the non-collinearity) of the motor actions $\mathbf u_t$. 
		Such independent vectors are needed for providing a sufficient number of constraints to the estimation algorithm (this condition can be easily satisfied by performing random babbling-like motions).
	\end{remark}

	\subsection{Localised Adaptation}
	Once the cost function \eqref{eq:cost_function_distributed} has been minimised, the computed transformation matrix $\widehat{\mathbf A}_t$ \emph{locally approximates} the robot's sensorimotor model around the $l$th unit. 
	Note that the stability of the total $N$ units is analogous the analysis shown in the previous section; A global analysis is out of the scope of this work.

	\begin{algorithm}[t]
	\caption{{$\text{Compute a suitable }\gamma$}}
	\begin{algorithmic}[1]
	\State {$\gamma\gets$ initial value $<1$, $\mu\gets$ small step}
	\Repeat
	\State $\gamma\gets\gamma - \mu$
	\Until {$\mathbf C>0$}
	\end{algorithmic}
	\end{algorithm}

	The associated local training data \eqref{eq:data_structure} must then be released from memory to allow for new relations to be learnt---if needed.
	However, for the case where changes in the sensorimotor conditions occur, the model may contain inaccuracies in some or all computing units, and thus, its transformation matrices cannot be used for controlling the robot's motion.
	To cope with this issue, we need to first quantitatively assess such errors.
	For that, the following weighted \emph{distortion} metric is introduced:
	\begin{equation}
	U_t = \mathbf e_t^\T \mathbf B \mathbf e_t
	\label{eq:distortion_metric}
	\end{equation}
	where $\mathbf B>0$ denotes a positive-definite diagonal weight matrix to homogenise different scales in the approximation error $\mathbf e_t = \widehat{\mathbf A}^s \mathbf u_t - \boldsymbol\delta_t\in\mathbb R^m$.
	The scalar index $s$ is found by solving the search problem:
	\begin{equation}
	s = \argmin_j \|\mathbf w^j - \mathbf x_t\|
	\label{eq:search_problem}
	\end{equation}

	To enable adaptation of problematic units, we evaluate the magnitude of the metric $U_t$, and if found to be larger than an arbitrary threshold $U_t > |\varepsilon|$, new motion and sensor data must be collected around the $s$th computing unit to construct the revised structure $\mathcal D^s$ by using a \emph{push} approach:
	\begin{equation}
	\mathbf d_1 \leftarrow \begin{Bmatrix} \mathbf x_t & \mathbf u_t & \boldsymbol\delta_t \end{Bmatrix}
	\label{eq:babbling_collect_data}
	\end{equation}
	that updates the topmost observation and discards the oldest (bottom) data, so as to keep a constant number $\tau$ of data points.
	The transformation matrices are then computed with the new data.

	\subsection{Motion Controller}
	The update rule \eqref{eq:update_rule} computes an adaptive transformation matrix $\widehat{\mathbf A}_t^l$ for each of the $N$ units in the system.
	To provide a smooth transition between different units, let us introduce the matrix $\mathbf L_t\in\mathbb R^{m\times n}$ which is updated as follows\footnote{For simplicity, we initialise $\mathbf L_0 = \mathbf 0_{n\times n}$ with a zero matrix.}:
	\begin{equation}
	\mathbf L_{t+1} = \mathbf L_t - \eta \left( \mathbf L_t - \widehat{\mathbf A}_t^s \right)
	\end{equation}
	where $\eta>0$ is a tuning gain.
	The above matrix represents a filtered version of $\widehat{\mathbf A}_t^s$, where $s$ denotes the index of the active unit, as defined in \eqref{eq:search_problem}.
	With this approach, the transformation matrix smoothly changes between adjacent neighbourhoods, while providing stable values in the vicinity of the active unit; It can be seen as a continuous interpolation between adjacent neighbourhoods.
	
	The motor command with adaptive model is implemented as follows:
	\begin{equation}
	\mathbf u_t = - \lambda \mathbf L^{\#}_t \sat(\mathbf y_t-\mathbf y^*)
	\label{eq:adaptive_motor_action}
	\end{equation}
	The stability of this \emph{kinematic control} method can be analysed with its resulting closed-loop first-order system (a practice also commonly adopted with visual servoing controllers \cite{Journals:Chaumette2006}).
	To this end, we use a small displacement approach (motivated by the local target provided by the saturation function), where we introduce the increment vector $\mathbf i = - \sat(\mathbf y_t - \mathbf y^*)$ and define the local reference position $\overline{\boldsymbol y} = \mathbf y_t + \mathbf i \in\mathbb R^m$.
	Let us consider the case when the $N$ units have minimised the cost functions \eqref{eq:cost_function_distributed}.
	Note that the asymptotic minimisation of $\|\widehat{\mathbf a}_t^l - \mathbf a^l\|$ implies that $\widehat{\mathbf A}_t^s$ inherits the rank properties of $\mathbf A_t$, hence, the existence of the pseudo-inverse in \eqref{eq:adaptive_motor_action} is guaranteed; A regularisation term (see e.g. \cite{Book:Tikhonov2013}) can further be used to robustify the computation of $\mathbf L^{\#}_t$.
	
	\begin{proposition}
		For $n\ge m$ (i.e. more/equal motor actions than feedback features), the ``stiff'' kinematic control input \eqref{eq:adaptive_motor_action} provides the local feedback error $\mathbf y_t - \overline{\boldsymbol y}$ with asymptotic stability.
	\end{proposition}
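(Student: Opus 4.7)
The plan is to analyze the closed-loop dynamics of the feedback error $\mathbf{e}_t = \mathbf{y}_t - \overline{\boldsymbol y}$ and construct a discrete-time Lyapunov function for it. By the definition $\overline{\boldsymbol y} = \mathbf{y}_t + \mathbf{i}$ with $\mathbf{i} = -\sat(\mathbf{y}_t - \mathbf{y}^*)$, the feedback error satisfies $\mathbf{e}_t = \sat(\mathbf{y}_t - \mathbf{y}^*)$, which is precisely the quantity that drives the control law \eqref{eq:adaptive_motor_action}. I would first substitute \eqref{eq:adaptive_motor_action} into the plant equation \eqref{eq:differential_model} to obtain
\begin{equation}
\mathbf{y}_{t+1} - \overline{\boldsymbol y} = \mathbf{e}_t - \lambda\, \mathbf{A}_t \mathbf{L}_t^{\#}\, \mathbf{e}_t = \bigl(\mathbf{I}_m - \lambda\,\mathbf{A}_t \mathbf{L}_t^{\#}\bigr) \mathbf{e}_t. \nonumber
\end{equation}

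Next I would exploit the hypothesis that all $N$ units have minimised the cost \eqref{eq:cost_function_distributed}, so by Proposition~1 the estimation error $\|\widehat{\mathbf{a}}_t^l - \mathbf{a}^l\|$ vanishes asymptotically, implying $\widehat{\mathbf{A}}_t^s \to \mathbf{A}_t$ in the active neighbourhood, and therefore (by the contraction property of the first-order filter defining $\mathbf{L}_t$, for $0<\eta<1$) $\mathbf{L}_t \to \mathbf{A}_t$ as well. Because $n\ge m$ guarantees that $\mathbf{A}_t$ has full row rank, the generalised pseudo-inverse obeys the right-inverse identity $\mathbf{A}_t \mathbf{A}_t^{\#} = \mathbf{I}_m$, so the closed-loop reduces locally to the scalarised first-order system $\mathbf{e}_{t+1} = (1-\lambda)\,\mathbf{e}_t$.

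Stability would then follow from the candidate $V_t = \mathbf{e}_t^\T \mathbf{e}_t$, whose forward difference is
\begin{equation}
V_{t+1} - V_t = \bigl((1-\lambda)^2 - 1\bigr)\|\mathbf{e}_t\|^2 = -\lambda(2-\lambda)\,\|\mathbf{e}_t\|^2, \nonumber
\end{equation}
which is strictly negative for any $\mathbf{e}_t\ne\mathbf{0}$ whenever $0<\lambda<2$. Invoking the discrete-time Lyapunov direct method yields asymptotic convergence of $\mathbf{y}_t$ to the local reference $\overline{\boldsymbol y}$, which is the claimed property.

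The main obstacle is justifying the identity $\mathbf{A}_t \mathbf{L}_t^{\#} = \mathbf{I}_m$ rigorously rather than asymptotically: strictly speaking $\mathbf{L}_t$ lags $\widehat{\mathbf{A}}_t^s$ through the first-order filter, and the latter only converges to $\mathbf{A}_t$ as $t\to\infty$, so the closed-loop matrix is $\mathbf{I}_m - \lambda\mathbf{A}_t\mathbf{L}_t^{\#}$ with a time-varying perturbation. I would handle this either by treating the filter/adaptation transients as a vanishing perturbation (absorbing them into a bound that still makes $V_{t+1}-V_t$ negative definite for $\lambda$ in a slightly shrunk interval), or by invoking the small-displacement assumption stated in the excerpt so that $\mathbf{A}_t$ may be taken constant within one step and the adaptation is assumed to have already converged on that neighbourhood; the latter matches the way the proposition is framed and keeps the argument purely local.
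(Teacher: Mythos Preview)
Your proposal is correct and follows essentially the same route as the paper: both substitute \eqref{eq:adaptive_motor_action} into \eqref{eq:differential_model}, use $n\ge m$ together with the converged estimate to replace $\mathbf A_t\mathbf L_t^{\#}$ by $\mathbf I_m$, and arrive at the scalar contraction $(\mathbf y_{t+1}-\overline{\boldsymbol y})=(1-\lambda)(\mathbf y_t-\overline{\boldsymbol y})$. The only cosmetic differences are that the paper argues stability directly from this linear recursion (quoting $0<\lambda<1$, which is sufficient but conservative) whereas you wrap it in an explicit Lyapunov function and obtain the sharper range $0<\lambda<2$; also, the paper defers the justification of $\mathbf A_t\mathbf L_t^{\#}\approx\mathbf I_m$ to a remark following the proof, while you spell out the right-inverse and filter-convergence reasoning inside the argument.
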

	
	\begin{proof}
		Substitution of the controller \eqref{eq:adaptive_motor_action} into the difference model \eqref{eq:differential_model} yields the closed-loop system:	
		\begin{align}
		\mathbf y_{t+1} &= \mathbf y_t - \lambda \sat(\mathbf y_t - \mathbf y^*) = \mathbf y_t + \lambda \mathbf i \pm \lambda\mathbf y_t \nonumber \\
		&= \mathbf y_t - \lambda\mathbf y_t + \lambda \overline{\boldsymbol y} = \mathbf y_t - \lambda (\mathbf y_t - \overline{\boldsymbol y} ) \label{eq:small_displacement_stability}
		\end{align}
		Adding $\pm\overline{\boldsymbol y}$ to \eqref{eq:small_displacement_stability} and after some algebraic operation, we obtain:
		\begin{equation}
		\left(\mathbf y_{t+1} - \overline{\boldsymbol y} \right) = (1-\lambda)\left(\mathbf y_t - \overline{\boldsymbol y}\right)
		\end{equation}
		which for a gain satisfying $0<\lambda<1$, it implies local asymptotic stability of the small displacement error $(\mathbf y_t - \overline{\boldsymbol y})$ \cite{Book:Kuo_digital}.
	\end{proof}
	
	\begin{remark}
	Note that the above stability analysis assumes that robot's trajectories are not perturbed by external forces and that the estimated interaction matrix locally satisfies $\mathbf A_t \mathbf L^{\#}_t \mathbf A_t \approx \mathbf A_t $ around the active neighbourhood 
	\end{remark}

	\section{Case of Study}\label{sec:results}
	In this section, we validate the performance of the proposed method with numerical simulations and experiments.
	A vision-based manipulation task with a deformable cable is used as our case of study \cite{Journals:Bretl2014}: It consists in the robot actively deforming the object into a desired shape by using visual feedback of the cable's contour (see e.g. \cite{Proceedings:Zhu2018}).	
	Soft object manipulation tasks are challenging---and relevant to the fundamental problem addressed here---since the sensorimotor models of deformable objects are typically unknown or subject to large uncertainties \cite{Journals:Sanchez2018}. 
	Therefore, the transformation matrix relating the shape feature functional and the robot motions is difficult to compute.
	The proposed algorithm will be used to adaptively approximate the unknown model.
	Figure \ref{fig:cable_manipulation} conceptually depicts the setup of this sensorimotor control problem.

	\begin{figure}[t]
		\centering
		\includegraphics[scale = 1]{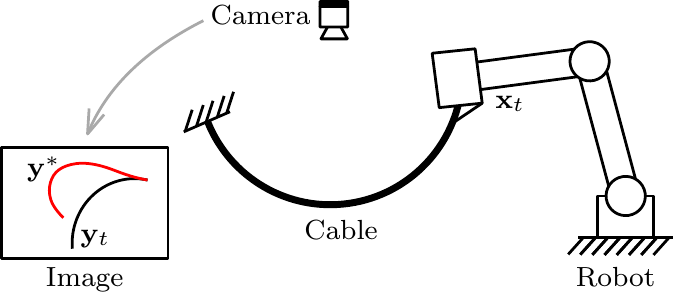}
		\caption{Representation of the cable manipulation case of study, where a vision sensor continuously measures the cable's feedback shape $\mathbf y_t$, which must be actively deformed towards $\mathbf y*$.}
		\label{fig:cable_manipulation}
	\end{figure}

	\subsection{Simulation Setup}
	For this study, we consider a planar robot arm that rigidly grasps one end of an elastic cable, whose other end is static; We assume that the total motion of this composed cable-robot system remains on the plane.
	A monocular vision sensor observes the manipulated cable and measures its 2D contour in real-time.
	The dynamic behaviour of the elastic cable is simulated as in \cite{Journals:Wakamatsu2004} by using the minimum energy principle \cite{Book:Hamill2014}, whose solution is computed using the CasADi framework \cite{Journals:Andersson2018}.
	The cable is assumed to have negligible plastic behaviour. 
	All numerical simulation algorithms are implemented in MATLAB. 
	The cable simulation code is publicly available at \url{https://github.com/Jihong-Zhu/cableModelling2D}.

	Let the long vector $\mathbf s_t\in\mathbb R^{2\alpha}$ represents the 2D profile of the cable, which is simulated using a resolution of $\alpha=100$ data points.
	To perform the task, we must compute a vector of feedback features $\mathbf y_t$ that characterises the object's configuration.
	For that, we use the approach described in \cite{Journals:Digumarti2019,dna2018_tro} that approximates $\mathbf s_t$ with truncated Fourier series (in our case, we used 4 harmonics), and then constructs $\mathbf y_t$ with the respective Fourier coefficients \cite{Proceedings:Collewet2000}.
	The use of these coefficients as feedback signals enable us to obtain a compact representation of the object's configuration, however, it complicates the analytical derivation of the matrix $\mathbf A_t$.
	
	\subsection{Approximation of $\mathbf A_t$}
	To construct the data structure \eqref{eq:data_structure}, we collect $\tau=40$ data observations $\mathbf d_t$ at random locations around the manipulation workspace.
	Next, we define local neighbourhoods centred at the configuration points $\mathbf w^1=[0.3,0.5]$, $\mathbf w^2=[0.5,0.5]$, $\mathbf w^3=[0.5,0.3]$ and $\mathbf w^4=[0.5,0.5]$.
	These neighbourhoods are defined with a standard deviation of $\sigma=1.3$.
	With the collected observations, $l=1,\ldots,4$ matrices $\widehat{\mathbf A}^l_t$ are computed using the update rule \eqref{eq:updae_rule_scalar}.

	\begin{figure}
		\centering
		\includegraphics[width=0.8\columnwidth]{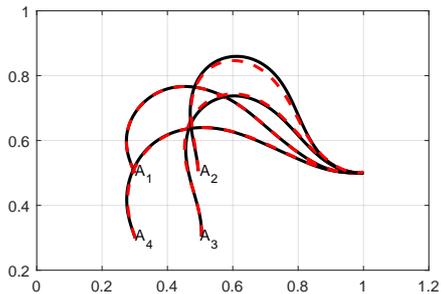}
		\caption{Various configurations of the visually measured cable profile (black solid line) and its approximation with Fourier series (red dashed line).}
		\label{fig:single_shape_test}
	\end{figure}

	Figure \ref{fig:single_shape_test} depicts the measured shape (black solid line) of the cable at the four points $\mathbf w^l$ and the shape that is approximated (red dashed line) with the feedback feature vector $\mathbf y_t$ (i.e the Fourier coefficients).
	It shows that 4 harmonics provide sufficient accuracy for representing the object's configuration. 
	To evaluate the accuracy of the computed discrete configuration space and its associated matrices $\widehat{\mathbf A}^l_t$, we conduct the following test: The robot is commanded to move the cable along a circular trajectory that passes through the four points $\mathbf w^l$. 
	The following energy function is computed throughout this trajectory:
	\begin{equation}
	G = \left\| \boldsymbol\delta_t-\widehat{\mathbf A}_t^l\mathbf u_t\right\|^2	
	\end{equation}
	which quantifies the accuracy of the local differential mapping \eqref{eq:differential_model}. 
	The index $l$ switches (based on the solution of \eqref{eq:search_problem}) as the robot enters a different neighbourhood.
	
	Figure \ref{fig:single_shape_test_cost} depicts the profile of the function $G$ along the trajectory. 
	We can see that this error function increases as the robot approaches the neighbourhood's boundary.
	The ``switch'' label indicates the time instant when ${\mathbf A}_t^l$ switches to different (more accurate) matrix, an action that decreases the magnitude of $G$.
	This result confirms that the proposed adaptive algorithm provides local directional information on how the motor actions transform into sensor changes.

	\begin{figure}
		\centering
		\includegraphics[width=\columnwidth]{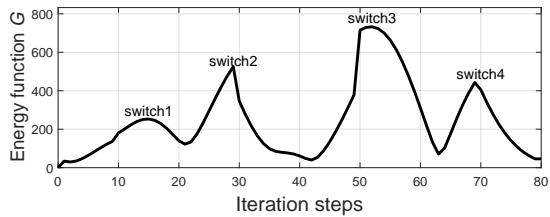}
		\caption{Profile of the function $G$ that is computed along the circular trajectory passing through the points in Figure \ref{fig:single_shape_test}; The ``switch'' label indicates the instant when $\widehat{\mathbf A}_t^l$ switches to different one.}
		\label{fig:single_shape_test_cost}
	\end{figure}

	\subsection{Sensor-Guided Motion}
	In this section, we make use of the approximated sensorimotor model to guide the motion of a robotic system based on feedback features.
	To this end, various cable shapes are defined as target configurations $\mathbf y^*$ (to provide physically feasible targets, these shapes are collected from previous sensor observations). 
	The target configurations are then given to the motion controller \eqref{eq:adaptive_motor_action} to automatically perform the task. 
	The controller implemented with saturation bounds of $|\sat(\cdot)|\le2$ and a feedback gain $\lambda=0.1$.

	Figure \ref{fig:trajectories} depicts the progression of the cable shapes obtained during these numerical simulations.
	The initial $\mathbf y_0$ and the intermediate configurations are represented with solid black curves, whereas the final shape $\mathbf y^*$ is represented with red dashed curves.
	To assess the accuracy of the controller, the following cost function is computed throughout the shaping motions:
	\begin{equation}
	E = \left\| \mathbf y_t - \mathbf y^* \right\|^2	
	\end{equation}
	For these four shaping actions, Figure \ref{fig:single_control_test_cost} depicts the time evolution of the function $E$.
	This figure clearly shows that the feedback error is asymptotically minimised.

	
	\begin{figure}[t]
		\centering
		\includegraphics[width=0.8\columnwidth]{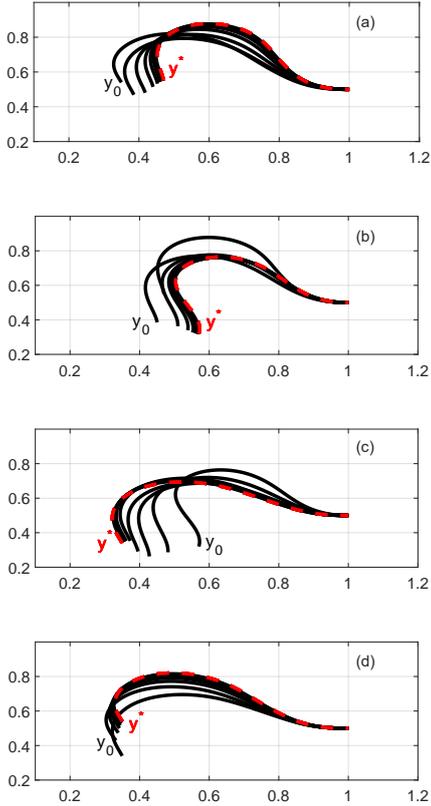}
		\caption{Initial and final configurations of the shape control simulation with a single robot}
		\label{fig:trajectories}
	\end{figure}
	
	\begin{figure}
		\centering
		\includegraphics[width=\columnwidth]{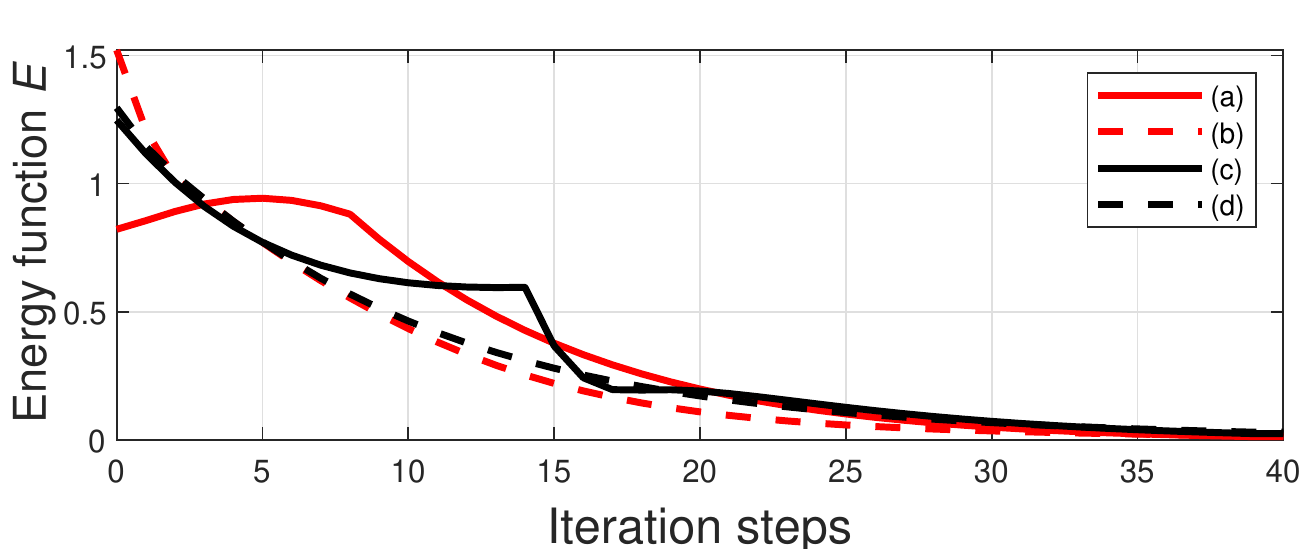}
		\caption{Minimization process of the energy function {$E$}}
		\label{fig:single_control_test_cost}
	\end{figure}
	 
	Now, consider the setup depicted in Figure \ref{fig:two_robot}, which has two 3-DOF robots jointly manipulating the deformable cable.
	For this more complex scenario, the total configuration vector $\mathbf x_t$ must be constructed with the 3-DOF pose (position and orientation) vectors of both robot manipulators as $\mathbf x_t = [\tensor*[^L]{\mathbf x}{_t^\T},\tensor*[^R]{\mathbf x}{_t^\T}{]}^\T\in\mathbb R^6$.
	Training of the sensorimotor model is done similarly as with the single-robot case described above; The same feedback gains and controller parameters are also used in this test.

	Figure \ref{fig:double_3_3} depicts the initial shape $\mathbf y_0$ and intermediate configurations (black solid curves), as well as the respective final shape $\mathbf y^*$ (red dashed curve) of the cable.
	Note that as more input DOF can be controlled by the robotic system, the object can be actively deformed into more complex configurations (cf. the achieved S-shape curve with the profiles in Figure \ref{fig:trajectories}).
	The result demonstrates that the approximated sensorimotor model provides sufficient directional information to the controller to properly ``steer'' the feature vector $\mathbf y_t$ towards the target $\mathbf y^*$.

	We now compare the performance of our method (using the same manipulation task shown in Figures \ref{fig:two_robot} and \ref{fig:double_3_3}) with two state-of-the-art approaches commonly used for guiding robots with unknown sensorimotor models.
	To this end, we consider the classical Broyden update rule \cite{Journals:Broyden:1965} and the recursive least-squares (RLS) \cite{Proceedings:Hosoda1994}.
	These two methods are used for estimating the matrix $\mathbf A$ that is needed to compute the control input \eqref{eq:general_motor_action}.
	To compare their performance, the cost function $E$ is evaluated throughout their respective trajectories; The same feedback gain $\lambda = 0.1$ is used for these three methods.
	Figure \ref{fig:double_3_3_cost} depicts the time evolution of $E$ computed with the three methods.
	This result demonstrates that the performance of our method is comparable to the other two classical approaches.

	\begin{figure}[t]
		\centering
		\includegraphics[scale = 1]{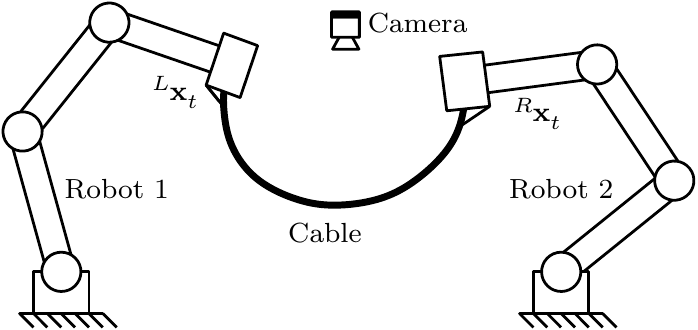}
		\caption{Representation of a two-robot setup where both systems must jointly shape the cable into a desired form.}
		\label{fig:two_robot}
	\end{figure}

	\begin{figure}
		\centering
		\includegraphics[width=0.8\columnwidth]{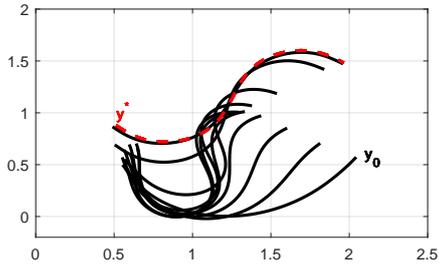}
		\caption{Initial and final configurations of the shape control simulation with two robots.}
		\label{fig:double_3_3}
	\end{figure}
	
	\begin{figure}[t]
		\centering
		\includegraphics[width=\columnwidth]{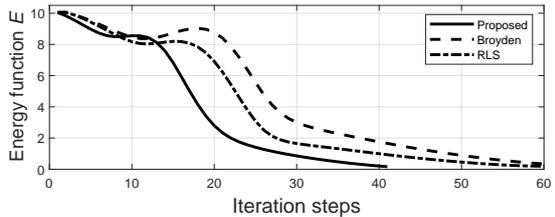}
		\caption{Minimization process of the energy function {$E$}.}
		\label{fig:double_3_3_cost}
	\end{figure}

	\subsection{Experiments}
	To validate the proposed theory, we developed an experimental platform composed of a three degrees-of-freedom serial robotic manipulator (DOBOT Magician), a Linux-based motion control system (Ubuntu 16.04), and a USB Webcam (Logitech C270); Image processing is performed by using the OpenCV libraries \cite{Journals:Bradski2000}.
	A sampling time of $\diff t \approx 0.04$ seconds is used in our Linux-based control system.
	In this setup, the robot rigidly grasps an elastic piece of pneumatic air tubing, whose other end is attached to the ground.
	The 3-DOF mechanism has a double parallelogram structure that enables to control the gripper's x-y-z position while keeping a constant orientation.
	For this experimental study, we only control 2-DOF of the robot such it manipulates the tubing with plane motions.
	Figure \ref{fig:dobot} depicts the setup.

	We conduct similar vision-guided experiments with the platform as the ones described in the previous section.
	For these tasks, the elastic tubing must be automatically positioned into a desired contour.
	The configuration dependant feedback for this task is computed with the observed contour of the object by using 2 harmonic terms \cite{dna2018_tro}.
	The sensorimotor model is similarly approximated around 4 configuration points (as in Figure \ref{fig:single_shape_test}), by performing random motions and collecting sensor data.

	Figure \ref{fig:experiments} depicts snapshots of the conducted experiments, where we can see the initial and final configurations of the system. 
	The red curves represent the (static) target configuration $\mathbf y^*$.
	For these two targets, Figure \ref{fig:experiments_minimisation} depicts the respective time evolution profiles of the energy function $E$, where we can clearly see that the feedback error is asymptotically minimised.
	The control inputs $\mathbf u_t$ used during the experiments are depicted in Figures \ref{fig:control1} and \ref{fig:control2}. 
	These motion commands are computed from raw vision measurements and a saturation threshold of $\pm 1$ is applied to its values.
	This results demonstrate that the approximated model can be used to locally guide motions of the robot with sensor feedback.

	\begin{figure}[t]
		\centering
		\includegraphics[width=\columnwidth]{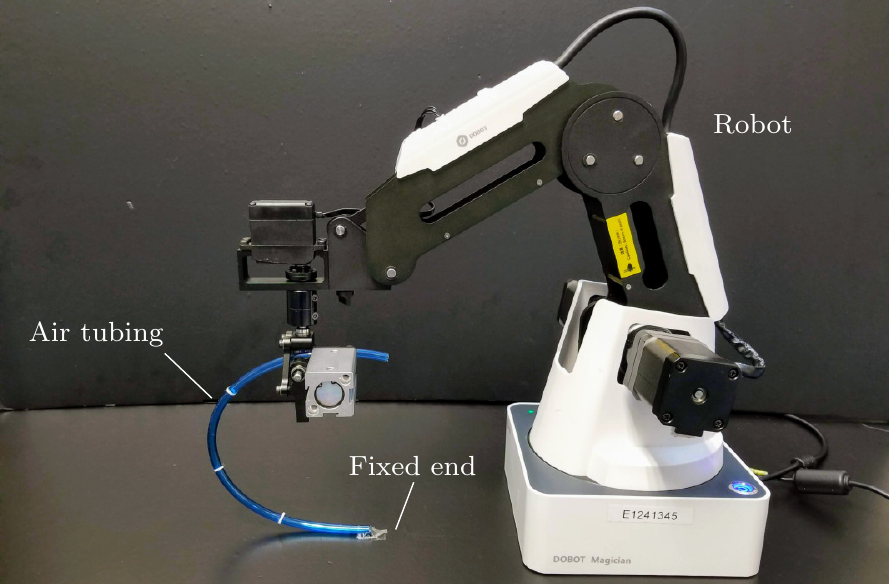}
		\caption{The experimental robotic setup}
		\label{fig:dobot}
	\end{figure}

	\begin{figure}
		\centering
		\includegraphics[width=\columnwidth]{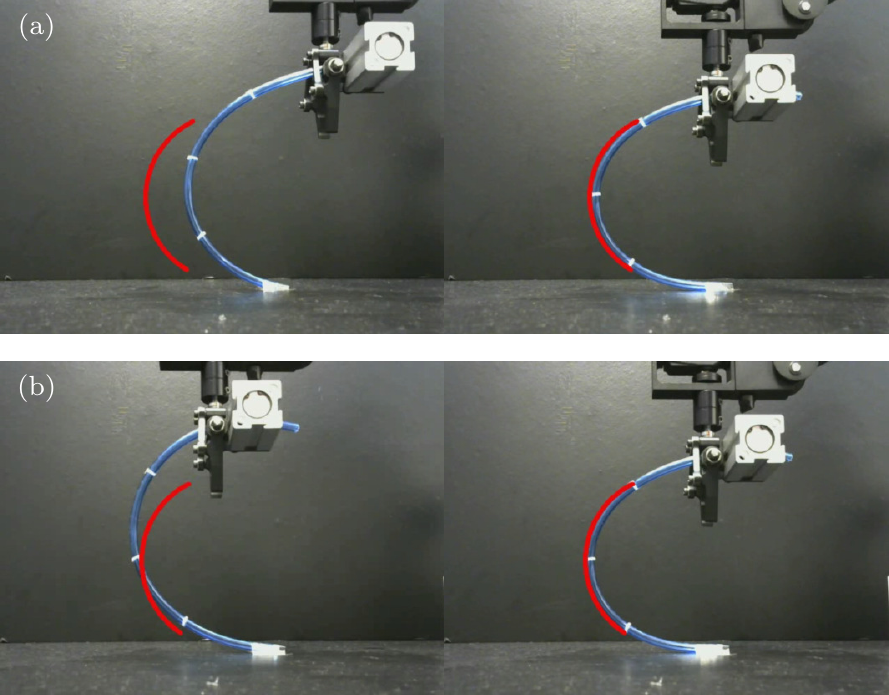}
		\caption{Snapshots of the initial (left image) and final (right image) configurations of the robot, where the red curve represents the target shape.}
		\label{fig:experiments}
	\end{figure}

	\begin{figure}
		\centering
		\includegraphics[width=\columnwidth]{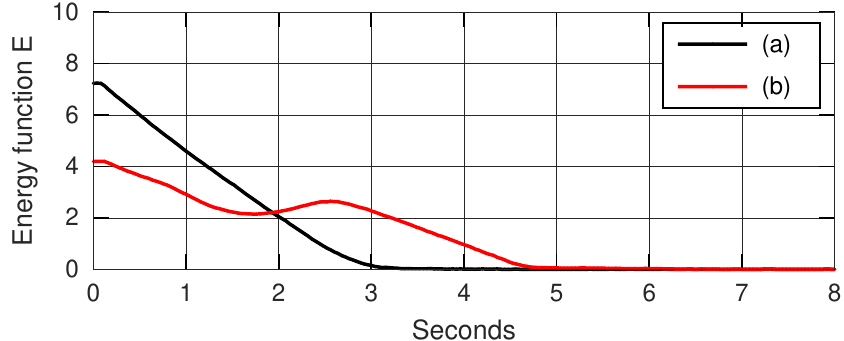}
		\caption{Asymptotic minimisation of the error functional $E$ obtained with the experiments shown in Figure \ref{fig:experiments}.}
		\label{fig:experiments_minimisation}
	\end{figure}
	
	\section{Conclusion}\label{sec:conclusion}
	In this paper, we describe a method to estimate sensorimotor relations of robotic systems.
	For that, we present a novel adaptive rule that computes local sensorimotor relations in real-time; The stability of this algorithm is rigorously analysed and its convergence conditions are derived.
	A motion controller to coordinate sensor measurements and robot motions is proposed.
	Simulation and experimental results with a cable manipulation case of study are reported to validate the theory.
	
	The main idea behind the proposed method is to divide the robot's configuration workspace into discrete nodes, and then, locally approximate at each node the mappings between robot motions and sensor changes.
	This approach resembles the estimation of piecewise linear systems, except that in our case, the computed model represents a differential Jacobian-like relation.
	The key guarantee the stability of the algorithm lies in collecting sufficient linear independent motor actions (such condition can be achieved by performing random babbling motions).
	
	The main limitation of the proposed algorithm is the local nature of its model, which can be improved by increasing the density of the distributed computing units.
	Another issue is related to the scalability of its discretised configuration space. 
	Note that for 3D spaces, the method can fairly well approximate the sensorimotor model, yet for multiple DOF (e.g. more than 6) the data is difficult to manage and visualise.

	As future work, we would like to implement our adaptive method with other sensing modalities and mechanical configurations, e.g. with an eye-in-hand visual servoing (where the camera orientation is arbitrary) and with variable morphology manipulators (where the link's length and joint's configuration are not known).

	\begin{figure}
		\centering
		\includegraphics[width=\columnwidth]{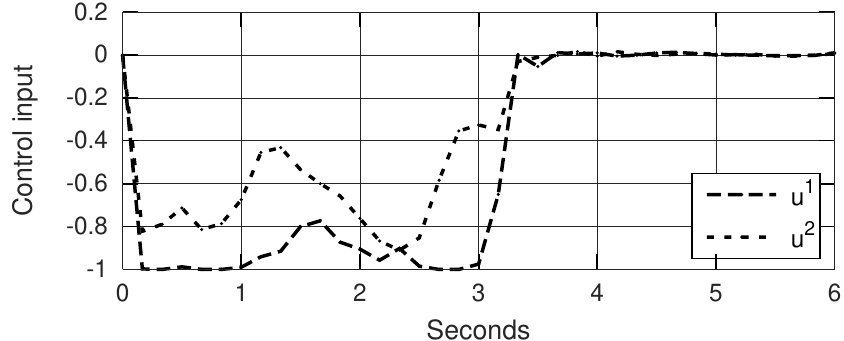}
		\caption{Control input (with normalised units of pixel/s) of the experiment (a) shown in Figure \ref{fig:experiments}.}
		\label{fig:control1}
	\end{figure}

	\begin{figure}
		\centering
		\includegraphics[width=\columnwidth]{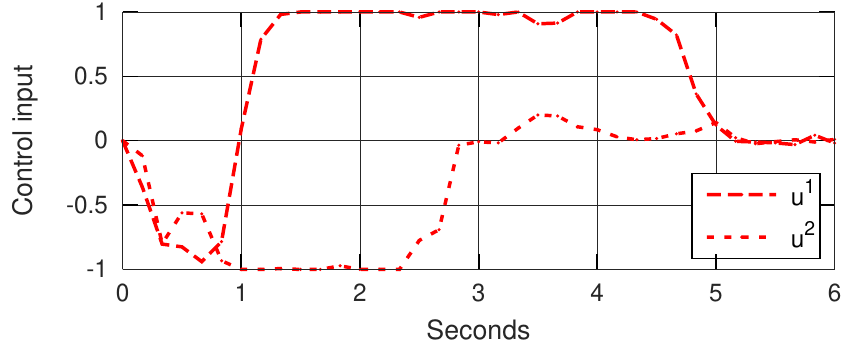}
		\caption{Control input (with normalised units of pixel/s) of the experiment (b) shown in Figure \ref{fig:experiments}.}
		\label{fig:control2}
	\end{figure}

	\section*{Funding}
	This research work is supported in part by the Research Grants Council (RGC) of Hong Kong under grant number 14203917, in part by PROCORE-France/Hong Kong Joint Research Scheme sponsored by the RGC and the Consulate General of France in Hong Kong under grant F-PolyU503/18, in part by the Chinese National Engineering Research Centre for Steel Construction (Hong Kong Branch) at PolyU under grant BBV8, in part by the Key-Area Research and Development Program of Guangdong Province 2020 under project 76 and in part by The Hong Kong Polytechnic University under grant G-YBYT.

\bibliographystyle{IEEEtran}
\bibliography{/home/dauid/Articles/biblio/david.bib,/home/dauid/Articles/biblio/bibliography.bib}

\begin{thebibliography}{10}
\providecommand{\url}[1]{#1}
\csname url@rmstyle\endcsname
\providecommand{\newblock}{\relax}
\providecommand{\bibinfo}[2]{#2}
\providecommand\BIBentrySTDinterwordspacing{\spaceskip=0pt\relax}
\providecommand\BIBentryALTinterwordstretchfactor{4}
\providecommand\BIBentryALTinterwordspacing{\spaceskip=\fontdimen2\font plus
\BIBentryALTinterwordstretchfactor\fontdimen3\font minus
  \fontdimen4\font\relax}
\providecommand\BIBforeignlanguage[2]{{%
\expandafter\ifx\csname l@#1\endcsname\relax
\typeout{** WARNING: IEEEtran.bst: No hyphenation pattern has been}%
\typeout{** loaded for the language `#1'. Using the pattern for}%
\typeout{** the default language instead.}%
\else
\language=\csname l@#1\endcsname
\fi
#2}}

\bibitem{Book:Nof1999}
S.~Nof, \emph{Handbook of Industrial Robotics}, ser. Electrical and electronic
  engineering.\hskip 1em plus 0.5em minus 0.4em\relax Wiley, 1999, no. v. 1.

\bibitem{dna2019_ccc}
D.~Navarro-Alarcon, A.~Cherubini, and X.~Li, ``On model adaptation for
  sensorimotor control of robots,'' in \emph{Chinese Control Conference}, 2019,
  pp. 2548--2552.

\bibitem{Journals:Kohler1962}
I.~Kohler, ``Experiments with goggles,'' \emph{Scient. American}, vol. 206,
  no.~5, pp. 62--73, 1962.

\bibitem{Journals:von1982}
C.~Von~Hofsten, ``Eye--hand coordination in the newborn.'' \emph{Developmental
  psychology}, vol.~18, no.~3, p. 450, 1982.

\bibitem{Journals:Huang1994}
J.~Huang and C.-F. Lin, ``On a robust nonlinear servomechanism problem,''
  \emph{IEEE Transactions on Automatic Control}, vol.~39, no.~7, pp.
  1510--1513, Jul 1994.

\bibitem{dna2016_tro}
D.~Navarro-Alarcon, H.~Yip, Z.~Wang, Y.-H. Liu, F.~Zhong, T.~Zhang, and P.~Li.,
  ``Automatic {3D} manipulation of soft objects by robotic arms with adaptive
  deformation model,'' \emph{{IEEE Trans. Robot.}}, vol.~32, no.~2, pp.
  429--441, 2016.

\bibitem{Proceedings:Yu2018}
C.~{Yu}, L.~{Zhou}, H.~{Qian}, and Y.~{Xu}, ``Posture correction of quadruped
  robot for adaptive slope walking,'' in \emph{{IEEE} Int. Conf. on Robotics
  and Biomimetics}, Dec 2018, pp. 1220--1225.

\bibitem{Journals:Cherubini2015}
A.~Cherubini, R.~Passama, P.~Fraisse, and A.~Crosnier, ``A unified multimodal
  control framework for human-robot interaction,'' \emph{Robot. and Auton.
  Syst.}, vol.~70, pp. 106 -- 115, 2015.

\bibitem{Journals:Hutchinson1996}
S.~Hutchinson, G.~Hager, and P.~Corke, ``A tutorial on visual servo control,''
  \emph{{IEEE} Trans. Robot. Autom.}, vol.~12, no.~5, pp. 651--670, Oct. 1996.

\bibitem{Journals:Sigaud2011}
O.~Sigaud, C.~Salaün, and V.~Padois, ``On-line regression algorithms for
  learning mechanical models of robots: A survey,'' \emph{{Rob. Auton. Syst.}},
  vol.~59, no.~12, pp. 1115 -- 1129, 2011.

\bibitem{Journals:Wei1998}
G.-Q. Wei, K.~Arbter, and G.~Hirzinger, ``Active self-calibration of robotic
  eyes and hand-eye relationships with model identification,'' \emph{{IEEE
  Trans. on Robotics and Automation}}, vol.~14, no.~1, pp. 158--166, 1986.

\bibitem{dna2015_iros}
D.~Navarro-Alarcon, H.~M. Yip, Z.~Wang, Y.-H. Liu, W.~Lin, and P.~Li,
  ``Adaptive image-based positioning of {RCM} mechanisms using angle and
  distance features,'' in \emph{{IEEE/RSJ Int. Conf. on Intelligent Robots and
  Systems}}, 2015, pp. 5403--5409.

\bibitem{Journals:Liu2013}
Y.-H. Liu, H.~Wang, W.~Chen, and D.~Zhou, ``Adaptive visual servoing using
  common image features with unknown geometric parameters,'' \emph{Automatica},
  vol.~49, no.~8, pp. 2453--2460, Aug. 2013.

\bibitem{Journals:Wang2008}
H.~Wang, Y.-H. Liu, and D.~Zhou, ``Adaptive visual servoing using point and
  line features with an uncalibrated eye-in-hand camera,'' \emph{{IEEE} Trans.
  Robot.}, vol.~24, no.~4, pp. 843--857, Aug. 2008.

\bibitem{Journals:Alambeigi2018}
F.~{Alambeigi}, Z.~{Wang}, R.~{Hegeman}, Y.~{Liu}, and M.~{Armand}, ``A robust
  data-driven approach for online learning and manipulation of unmodeled 3-d
  heterogeneous compliant objects,'' \emph{{IEEE Robot. Autom. Lett.}}, vol.~3,
  no.~4, pp. 4140--4147, 2018.

\bibitem{Proceedings:Jagersand1997}
M.~Jagersand, O.~Fuentes, and R.~Nelson, ``Experimental evaluation of
  uncalibrated visual servoing for precision manipulation,'' in \emph{Proc.
  {IEEE} Int. Conf. Robotics and Automation}, vol.~4, 1997, pp. 2874--2880.

\bibitem{Proceedings:Hosoda1994}
K.~Hosoda and M.~Asada, ``Versatile visual servoing without knowledge of true
  {Jacobian},'' in \emph{Proc. {IEEE/RSJ} Int. Conf. Intelligent Robots and
  Systems}, vol.~1, 1994, pp. 186--193.

\bibitem{tiffany2017_rcar}
H.~M. Yip, D.~Navarro-Alarcon, and Y.~Liu, ``An image-based uterus positioning
  interface using adaline networks for robot-assisted hysterectomy,'' in
  \emph{{IEEE} Int. Conf. on Real-time Computing and Robotics}, 2017, pp.
  182--187.

\bibitem{Journals:Hu2019}
Z.~{Hu}, T.~{Han}, P.~{Sun}, J.~{Pan}, and D.~{Manocha}, ``3-d deformable
  object manipulation using deep neural networks,'' \emph{{IEEE Robot. Autom.
  Lett.}}, vol.~4, no.~4, pp. 4255--4261, 2019.

\bibitem{Proceedings:Lyu2018}
S.~{Lyu} and C.~C. {Cheah}, ``Vision based neural network control of robot
  manipulators with unknown sensory jacobian matrix,'' in \emph{{IEEE/ASME}
  Int. Conf. on Advanced Intelligent Mechatronics}, 2018, pp. 1222--1227.

\bibitem{Journals:Li2014}
X.~{Li} and C.~C. {Cheah}, ``Adaptive neural network control of robot based on
  a unified objective bound,'' \emph{{IEEE Trans. Control Syst. Technol.}},
  vol.~22, pp. 1032--1043, May 2014.

\bibitem{Journals:Kohonen2013}
T.~Kohonen, ``Essentials of the self-organizing map,'' \emph{Neural Networks},
  vol.~37, pp. 52 -- 65, 2013, twenty-fifth Anniversay Commemorative Issue.

\bibitem{omar2019_taros}
O.~Zahra and D.~Navarro-Alarcon, ``A self-organizing network with varying
  density structure for characterizing sensorimotor transformations in robotic
  systems,'' in \emph{Towards Autonomous Robotic Systems}, 2019, pp. 167--178.

\bibitem{Journals:Pierris2017}
G.~Pierris and T.~S. Dahl, ``Learning robot control using a hierarchical
  som-based encoding,'' \emph{{{IEEE} Trans. Cogn. Develop. Syst.}}, vol.~9,
  no.~1, pp. 30 -- 43, 2017.

\bibitem{Journals:Escobar2016}
E.~Escobar-Juarez, G.~Schillaci, J.~Hermosillo-Valadez, and B.~Lara-Guzman, ``A
  self-organized internal models architecture for coding sensory–motor
  schemes,'' \emph{{Front. Robot. AI}}, vol.~3, p.~22, 2016.

\bibitem{Journals:Falkenhahn2015}
V.~Falkenhahn, T.~Mahl, A.~Hildebrandt, R.~Neumann, and O.~Sawodny, ``Dynamic
  modeling of bellows-actuated continuum robots using the euler-lagrange
  formalism,'' \emph{{IEEE Trans. Robot.}}, vol.~31, no.~6, pp. 1483--1496,
  2015.

\bibitem{zerui2015_tmech}
Z.~{Wang}, H.~M. {Yip}, D.~{Navarro-Alarcon}, P.~{Li}, Y.~{Liu}, D.~{Sun},
  H.~{Wang}, and T.~H. {Cheung}, ``Design of a novel compliant safe robot joint
  with multiple working states,'' \emph{{IEEE/ASME} Transactions on
  Mechatronics}, vol.~21, no.~2, pp. 1193--1198, 2016.

\bibitem{marie2020_ral}
M.~Tirindelli, M.~Victorova, J.~Esteban, S.~T. Kim, D.~Navarro-Alarcon, Y.~P.
  Zheng, and N.~Navab, ``Force-ultrasound fusion: Bringing spine robotic-us to
  the next ``level'','' \emph{{IEEE Robot. Autom. Lett.}}, vol.~PP, pp. 1--8,
  2020.

\bibitem{Journals:Bouyarmane2019}
K.~{Bouyarmane}, K.~{Chappellet}, J.~{Vaillant}, and A.~{Kheddar}, ``Quadratic
  programming for multirobot and task-space force control,'' \emph{{IEEE Trans.
  Robot.}}, vol.~35, no.~1, pp. 64--77, 2019.

\bibitem{Journals:Cherubini2013}
A.~Cherubini and F.~Chaumette, ``Visual navigation of a mobile robot with
  laser-based collision avoidance,'' \emph{{Int. J. Robot. Res.}}, vol.~32,
  no.~2, pp. 189--205, 2013.

\bibitem{Proceedings:Magassouba2016}
A.~Magassouba, N.~Bertin, and F.~Chaumette, ``Audio-based robot control from
  interchannel level difference and absolute sound energy,'' in \emph{Proc.
  {IEEE} Int. Conf. Intelligent Robots and Systems}, 2016, pp. 1992--1999.

\bibitem{Journals:Defoort2009}
M.~{Defoort} and T.~{Murakami}, ``Sliding-mode control scheme for an
  intelligent bicycle,'' \emph{IEEE T Ind Electron}, vol.~56, no.~9, pp.
  3357--3368, Sep. 2009.

\bibitem{dna2018_tro}
D.~Navarro-Alarcon and Y.-H. Liu, ``Fourier-based shape servoing: A new
  feedback method to actively deform soft objects into desired {2D} image
  shapes,'' \emph{{IEEE Trans. Robot.}}, vol.~34, no.~1, pp. 272--1279, 2018.

\bibitem{Proceedings:Saponaro2015}
P.~{Saponaro}, S.~{Sorensen}, A.~{Kolagunda}, and C.~{Kambhamettu}, ``Material
  classification with thermal imagery,'' in \emph{{IEEE} Conf. on Computer
  Vision and Pattern Recognition}, June 2015, pp. 4649--4656.

\bibitem{Journals:Chaumette2006}
F.~Chaumette and S.~Hutchinson, ``{Visual servo control. Part I: Basic
  approaches},'' \emph{{IEEE} Robot. Autom. Mag.}, vol.~13, no.~4, pp. 82--90,
  2006.

\bibitem{Journals:Whitney1969}
D.~Whitney, ``Resolved motion rate control of manipulators and human
  prostheses,'' \emph{{IEEE} Trans. Man-Mach. Syst.}, vol.~10, no.~2, pp.
  47--53, Jun. 1969.

\bibitem{Journals:Siciliano1990}
B.~Siciliano, ``Kinematic control of redundant robot manipulators: A
  tutorial,'' \emph{Journal of Intelligent and Robotic Systems}, vol.~3, no.~3,
  pp. 201--212, 1990.

\bibitem{Journals:Chang2018}
H.~{Chang}, S.~{Wang}, and P.~{Sun}, ``Omniwheel touchdown characteristics and
  adaptive saturated control for a human support robot,'' \emph{{IEEE} Access},
  vol.~6, pp. 51\,174--51\,186, 2018.

\bibitem{Book:Nakamura1991}
Y.~Nakamura, \emph{Advanced robotics: redundancy and optimization}.\hskip 1em
  plus 0.5em minus 0.4em\relax Boston, MA: Addison-Wesley Longman, 1991.

\bibitem{Journals:Sang2012}
Q.~{Sang} and G.~{Tao}, ``Adaptive control of piecewise linear systems: the
  state tracking case,'' \emph{{IEEE} Trans. Autom. Control}, vol.~57, no.~2,
  pp. 522--528, Feb 2012.

\bibitem{Journals:Cheah2003}
C.~C. Cheah, M.~Hirano, S.~Kawamura, and S.~Arimoto, ``Approximate jacobian
  control for robots with uncertain kinematics and dynamics,'' \emph{{IEEE
  Trans. on Robotics and Automation}}, vol.~19, no.~4, pp. 692--702, 2003.

\bibitem{Book:Kohonen2001}
T.~Kohonen, \emph{Self-Organizing Maps}.\hskip 1em plus 0.5em minus 0.4em\relax
  Springer Berlin Heidelberg, 2001.

\bibitem{Book:Haykin2009}
S.~Haykin, \emph{Neural Networks and Learning Machines}, ser. Neural networks
  and learning machines.\hskip 1em plus 0.5em minus 0.4em\relax Prentice Hall,
  2009, no.~10.

\bibitem{Proceedings:Saegusa2009}
R.~Saegusa, G.~Metta, G.~Sandini, and S.~Sakka, ``Active motor babbling for
  sensorimotor learning,'' in \emph{{Int. Conf. Robotics and Biomimetics}},
  2009, pp. 794--799.

\bibitem{Preprint:Bof2018}
N.~Bof, R.~Carli, and L.~Schenato, ``Lyapunov theory for discrete time
  systems,'' \emph{CoRR}, vol. abs/1809.05289, 2018.

\bibitem{Book:van_der_Schaft}
A.~van~der Schaft, \emph{L$_2$-Gain and Passivity Techniques in Nonlinear
  Control}.\hskip 1em plus 0.5em minus 0.4em\relax London, UK: Springer, 2000.

\bibitem{Book:Tikhonov2013}
A.~Tikhonov, A.~Goncharsky, V.~Stepanov, and A.~Yagola, \emph{Numerical Methods
  for the Solution of Ill-Posed Problems}, ser. Mathematics and Its
  Applications.\hskip 1em plus 0.5em minus 0.4em\relax Springer Netherlands,
  2013.

\bibitem{Book:Kuo_digital}
B.~Kuo, \emph{Digital Control Systems}, ser. Electrical Eng.\hskip 1em plus
  0.5em minus 0.4em\relax Oxford University Press, 1992.

\bibitem{Journals:Bretl2014}
T.~Bretl and Z.~McCarthy, ``Quasi-static manipulation of a {Kirchhoff} elastic
  rod based on a geometric analysis of equilibrium configurations,'' \emph{Int.
  J. Robot. Res.}, vol.~33, no.~1, pp. 48--68, 2014.

\bibitem{Proceedings:Zhu2018}
J.~{Zhu}, B.~{Navarro}, P.~{Fraisse}, A.~{Crosnier}, and A.~{Cherubini},
  ``Dual-arm robotic manipulation of flexible cables,'' in \emph{{IEEE/RSJ Int.
  Conf. on Robots and Intelligent Systems}}, 2018, pp. 479--484.

\bibitem{Journals:Sanchez2018}
J.~Sanchez, J.-A. Corrales, B.-C. Bouzgarrou, and Y.~Mezouar, ``Robotic
  manipulation and sensing of deformable objects in domestic and industrial
  applications: a survey,'' \emph{{Int. J. Robot. Res.}}, vol.~37, no.~7, pp.
  688--716, 2018.

\bibitem{Journals:Wakamatsu2004}
H.~Wakamatsu and S.~Hirai, ``Static modeling of linear object deformation based
  on differential geometry,'' \emph{Int. J. Robot. Res.}, vol.~23, no.~3, pp.
  293--311, Mar. 2004.

\bibitem{Book:Hamill2014}
P.~Hamill, \emph{A Student's Guide to Lagrangians and Hamiltonians}.\hskip 1em
  plus 0.5em minus 0.4em\relax Cambridge University Press, 2014.

\bibitem{Journals:Andersson2018}
J.~A.~E. Andersson, J.~Gillis, G.~Horn, J.~B. Rawlings, and M.~Diehl,
  ``{CasADi} -- {A} software framework for nonlinear optimization and optimal
  control,'' \emph{Math. Prog. Comp.}, no.~11, pp. 1--36, 2019.

\bibitem{Journals:Digumarti2019}
K.~M. Digumarti, B.~Trimmer, A.~T. Conn, and J.~Rossiter, ``Quantifying dynamic
  shapes in soft morphologies,'' \emph{Soft Robotics}, vol.~1, no.~1, pp.
  1--12, 2019.

\bibitem{Proceedings:Collewet2000}
C.~Collewet and F.~Chaumette, ``A contour approach for image-based control on
  objects with complex shape,'' in \emph{Proc. {IEEE/RSJ} Int. Conf.
  Intelligent Robots and Systems}, vol.~1, 2000, pp. 751--756.

\bibitem{Journals:Broyden:1965}
C.~G. Broyden, ``{A class of methods for solving nonlinear simultaneous
  equations},'' \emph{Math. Comp.}, vol.~19, pp. 577--593, Oct. 1965.

\bibitem{Journals:Bradski2000}
G.~Bradski, ``{The OpenCV Library},'' \emph{Dr. Dobb's Journal of Software
  Tools}, vol.~25, no.~11, pp. 120, 122--125, Nov. 2000.

\end{thebibliography}

\end{document}